\def\BibTeX{{\rm B\kern-.05em{\sc i\kern-.025em b}\kern-.08em
    T\kern-.1667em\lower.7ex\hbox{E}\kern-.125emX}}
\DeclareMathOperator{\bcw}{{\boldsymbol{\scriptstyle\mathcal{W}}}}
\newtheorem{theorem}{Theorem}
\newtheorem{assumption}{Assumption}
\newtheorem{lemma}{Lemma}
\newcommand{\dbtilde}[1]{\accentset{\approx}{#1}}
\begin{document}
\bstctlcite{IEEEexample:BSTcontrol}

\title{Deep-Relative-Trust-Based Diffusion for Decentralized Deep Learning\\
\thanks{This work was supported by EPSRC Grants EP/X04047X/1 and EP/Y037243/1. Emails: \{muyun.li23, aaron.fainman22, s.vlaski\}@imperial.ac.uk.}
}

\author{\IEEEauthorblockN{Muyun Li, Aaron Fainman and Stefan Vlaski}
\IEEEauthorblockA{\textit{Department of Electrical and Electronic Engineering, Imperial College London}}
}

\maketitle

\begin{abstract}
Decentralized learning strategies allow a collection of agents to learn efficiently from local data sets without the need for central aggregation or orchestration. Current decentralized learning paradigms typically rely on an averaging mechanism to encourage agreement in the parameter space. We argue that in the context of deep neural networks, which are often over-parameterized, encouraging consensus of the neural network outputs, as opposed to their parameters can be more appropriate. This motivates the development of a new decentralized learning algorithm, termed DRT diffusion, based on deep relative trust (DRT), a recently introduced similarity measure for neural networks. We provide convergence analysis for the proposed strategy, and numerically establish its benefit to generalization, especially with sparse topologies, in an image classification task.
\end{abstract}

\begin{IEEEkeywords}
Decentralized learning, distributed learning, deep neural networks, deep relative trust.
\end{IEEEkeywords}

\section{Introduction}
\label{sec:intro}
We consider a collection of \( K \) agents, each equipped with a neural network of identical architecture consisting of \( L \) layers. We denote the parameters of the \( p \)-th layer of agent \( k \) by \( w_{k}^{(p)} \), and collect for brevity \( w_{k} \triangleq \mathrm{col}\{ w_{k}^{(p)} \}_{p=1}^P \). We denote the output of the neural network with parameterization \( w_{k} \) to the input \( x \) by \( f(x;w_{k}) \). To train the parameters of the network to fit a random pair of inputs \( \bm{x} \) and outputs \( \bm{y} \), we define a loss function \( Q(w_k; \bm{x}, \bm{y}) \), giving rise to the risk:
\begin{align}
    J_k(w_k) = \mathds{E} Q(w_k; \bm{x}, \bm{y})
\end{align}
Most algorithms for decentralized learning pursue solutions to the consensus optimization problem~\cite{Nedic:2009,Sayed:2014,Lian:2017,Vlaski23}:
\begin{equation}
    \label{eqn:aggObjec}
    \min_w J(w) \quad \text{where } J(w) \triangleq \frac{1}{K} \sum_{k=1}^{K} J_k(w)
\end{equation}
For the purposes of motivating this work, it will be sufficient to review the diffusion algorithm \cite{Sayed:2014}:
\begin{subequations}
	\begin{align}
		\bm{\psi}_{k, i} & =\bm{w}_{k, i-1}-\mu \widehat{\nabla J}_k\left(\bm{w}_{k, i-1}\right) \label{eqn:diff1}\\
		\bm{w}_{k, i} & =\sum_{\ell=1}^K a_{\ell k} \bm{\psi}_{\ell, i} \label{eqn:diff2}
	\end{align}
	\label{diffusion}
\end{subequations}
Here, $\widehat{\nabla J}_k(\cdot)$ denotes a stochastic approximation of the true local gradient $\nabla J_k(\cdot)$ based on data available at time \( i \). A common choice is \( \widehat{\nabla J}_k(\cdot) \triangleq \nabla Q_k(\cdot; \boldsymbol{x}_i, \boldsymbol{y}_i) \), although many variants such as mini-batch approximations are possible. The scalars $a_{\ell k}$ are combination weights satisfying:
\begin{equation}
	a_{\ell k} \geq 0, \quad \sum_{\ell \in \mathcal{N}_k} a_{\ell k}=1, \quad a_{\ell k}=0 \text { if } \ell \notin \mathcal{N}_k
\end{equation}
where $\mathcal{N}_k$ denotes the neighbourhood of agent $k$. Step (\ref{eqn:diff1}) corresponds to a local gradient descent step, encouraging (local) optimality of the intermediate weight vectors \( \boldsymbol{\psi}_{k, i} \), while (\ref{eqn:diff2}) corresponds to a consensus averaging step, driving local weight vectors towards a common solution. Convergence of (\ref{eqn:diff1})--(\ref{eqn:diff2}) to a minimizer, first- or second order-stationary point of \eqref{eqn:aggObjec} has been established under various conditions on the local objectives \(J_k(\cdot) \) \cite{Nedic:2009, Chen:2015, Lian:2017,Vlaski21P2}. In all of these cases, an important quantity in the rate of convergence is the mixing rate of \( A = [a_{\ell k}] \), given by its second largest eigenvalue \( \lambda_2 \). As discussed in \cite{Lin:2004}, an effective albeit suboptimal method for achieving fast convergence is the Metropolis rule:
\begin{equation}
    \label{eqn:metropolis}
    a_{\ell k}= \begin{cases}1 / \max \left\{n_k, n_{\ell}\right\}, & \text { if } k \neq \ell \text { and }\ell \in \mathcal{N}_k \\ 1-\sum_{\ell \in \mathcal{N}_k \backslash\{k\}} a_{\ell k}, & \text { when } k=\ell \\ 0, & \text { otherwise }\end{cases}
\end{equation}
where $n_k\triangleq|\mathcal{N}_k|$ is the degree of node $k$. 

Observe that the combination step (\ref{eqn:diff2}) drives the parameters \( \boldsymbol{w}_{k, i} \) towards consensus. Other strategies for decentralized optimization and learning, such as EXTRA \cite{Shi:2015} or Exact diffusion \cite{Yuan:2019} rely on primal-dual constructions, while gradient-tracking based algorithms such as NEXT \cite{Lorenzo:2016} or DIGing \cite{Nedic:2017a} rely on the dynamic consensus mechanism. All of them, however, in a manner analogous to (\ref{eqn:diff2}) encourage equality in the parameter space.

In this work, we deviate from this paradigm and argue that when training neural networks, it may be more appropriate to encourage consensus in the function space. This is because modern neural networks are highly over-parameterized, and as a result the same function may be obtained with highly variable parameterizations. Indeed, some evidence exists in the literature that allowing for some variability in local parameterizations during training can aid generalization performance of deep neural networks~\cite{Kong:2021}. Motivated by these observations, we formulate a penalty-based variant of (\ref{eqn:aggObjec}), where agents are allowed to maintain distinct local parameterizations, provided that the resulting function is approximately the same. We provide analytical convergence guarantees for the resulting algorithm, and demonstrate numerically its benefit in deep learning.


\section{DRT Diffusion}
It can be verified that the diffusion algorithm~\eqref{eqn:diff1}--\eqref{eqn:diff2} is equivalent to incremental stochastic gradient descent applied to the following penalized variant of~\eqref{eqn:aggObjec}~\cite{Yuan:2019}:
\begin{align}
    \min_{\{w_k\}_{k=1}^K}J_k(w_k)+\frac{\eta}{2}\sum_{\ell \in\mathcal{N}_k}c_{\ell k}{\lVert w_k - w_{\ell}\lVert^2}
    \label{eqn:orig_obj}
\end{align}
where \( c_{\ell k} > 0 \) if and only if \( \ell \in \mathcal{N}_k \). We can observe that under this relaxation, individual agents are allowed to maintain local models \( w_{k} \), but are nevertheless encouraged to ensure \( w_{k} \approx w_{\ell} \) through the penalty term \( \frac{\eta}{2}\sum_{\ell \in\mathcal{N}_k}c_{\ell k}{\lVert w_k - w_{\ell}\lVert^2} \). In this work, instead of penalizing difference between parameters, we propose to penalize the difference between neural network outputs:
\begin{align}
    \min_{\{w_k\}_{k=1}^K}J_k(w_k)+\frac{\eta}{2}\sum_{\ell \in\mathcal{N}_k}c_{\ell k}\frac{\lVert f(x; w_k) - f(x; w_{\ell})\lVert^2}{\lVert f(x; w_{\ell}) \lVert^2}
\end{align}
One challenge of this formulation is that differentiating the penalty term \( \frac{\eta}{2}\sum_{\ell \in\mathcal{N}_k}c_{\ell k}\frac{\lVert f(x; w_k) - f(x; w_{\ell})\lVert^2}{\lVert f(x; w_{\ell}) \lVert^2} \) is computationally prohibitive. Instead, we will replace this penalty by an upper bound, motivated by the Deep Relative Trust (DRT) distance measure~\cite{Bernstein2020}:
\begin{equation}\label{eq:berny}
    \frac{\lVert f(x; w_{\ell}) - f(x; w_k)\lVert}{\lVert f(x; w_k) \lVert }\leq\prod_{p=1}^{L} \left(1+\frac{\lVert w_{k}^{(p)} - w_{\ell}^{(p)}\lVert}{\lVert w_{k}^{(p)}\lVert}\right)-1
\end{equation}
Using arguments analogous to those in~\cite{Bernstein2020}, we can establish a quadratic variant:
\begin{equation}
    \begin{aligned}\label{eq:relative_trust}
        & \frac{\lVert f(x; w_k) - f(x; w_{\ell})\lVert^2}{\lVert f(x; w_{\ell}) \lVert^2} \\ 
        \leq & 2^{L+1} \prod_{p=1}^{L} \left(1+\frac{\lVert w_{k}^{(p)} - w_{\ell}^{(p)}\lVert^2}{\lVert w_{\ell}^{(p)}\lVert^2}\right) + 2
    \end{aligned}
\end{equation}
which can be more conveniently differentiated than~\eqref{eq:berny}. This gives rise to the penalized optimization problem:
\begin{equation}
    \label{eqn:obj_func}
    \begin{aligned}
        & \min_{\{w_k\}_{k=1}^K}J_k(w_k)\\
        + & \frac{\eta}{2}\sum_{\ell \in\mathcal{N}_k}c_{\ell k}\left[2^{L+1}\prod_{p=1}^{L} \left(1+\frac{\lVert w_{k}^{(p)} - w_{\ell}^{(p)}\lVert^2}{\lVert w_{\ell}^{(p)}\lVert^2 + \kappa}\right)+2\right]
    \end{aligned}
\end{equation}
where we added \( \kappa \ge 0 \) for numerical stability. The parameter \( \eta \) controls the penalty associated with differences in neural network outputs, and is chosen as \(\eta = \frac{1}{\mu}\), where \(\mu\) represents the learning rate for neural network training.

When we compare (\ref{eqn:obj_func}) with (\ref{eqn:orig_obj}), we observe that as a result of the relaxation~\eqref{eq:relative_trust}, the regularized problem~\eqref{eqn:obj_func} still ensures consensus in the parameter space as $\eta \rightarrow \infty$. For finite \( \eta\), however, the optimal solutions to~\eqref{eqn:orig_obj} and the proposed formulation~\eqref{eqn:obj_func} will differ, with the penalty of the latter being driven in proportion to the effect individual weights have on the output of the neural network.

After applying an incremental stochastic gradient construction to \eqref{eqn:obj_func} and subsequently normalizing the combination weights, we obtain the DRT diffusion algorithm:
\begin{equation}
    \begin{aligned}
    & \bm{\psi}_{k, i}=\bm{w}_{k, i-1}-\mu \widehat{\nabla J}_k\left(\bm{w}_{k, i-1}\right) \\
    & \bm{w}_{k, i}^{(p)}=\sum_{\ell \in \mathcal{N}_k} \bm{a}_{\ell k, i}^{(p)} \bm{\psi}_{\ell, i}^{(p)}
    \end{aligned}
\end{equation}
Observe that the resulting algorithm resembles the classical diffusion strategy~\eqref{eqn:diff1}--\eqref{eqn:diff2}, except that the combination weights as a result of the penalty in~\eqref{eqn:obj_func} now also depend on the layer index \( p \) and time \( i \). More specifically, the combination weights $\bm{a}_{\ell k, i}^{(p)}$ are given by:
\begin{equation}
    \bm{a}_{\ell k, i}^{(p)}= \frac{\tilde{\bm{a}}_{\ell k, i}^{(p)}}{\sum_{\ell \in \mathcal{N}_k} \tilde{\bm{a}}_{\ell k, i}^{(p)}}
\end{equation}
where
\begin{equation}
    \tilde{\bm{a}}_{\ell k, i}^{(p^{*})}= \begin{cases} \min\left(\dbtilde{\bm{a}}_{\ell k, i}^{(p^{*})}, N\min_{\ell}\left(\dbtilde{\bm{a}}_{\ell k, i}^{(p^{*})+}\right)\right) & \ell \neq k \\ \frac{c_{kk}}{n_k-1}\sum_{\ell \in \mathcal{N}_k \backslash \{k\}} \tilde{\bm{a}}_{\ell k, i}^{(p^{*})}  & \ell =k\end{cases} 
    \label{eqn:matrixCons}
\end{equation}
\begin{equation}
    \dbtilde{\bm{a}}_{\ell k, i}^{(p^{*})}= \begin{cases} c_{\ell k}\frac{2^{L+1}\prod_{p=1}^{P} (1+\frac{\lVert \bm{w}_{k,i}^{(p)} - \bm{w}_{\ell, i}^{(p)}\lVert^2}{\lVert \bm{w}_{\ell, i}^{(p)}\lVert^2+\kappa})}{ \lVert \bm{w}_{\ell, i}^{(p^{*})}\lVert^2+\lVert \bm{w}_{k,i}^{(p^{*})} - \bm{w}_{\ell,i}^{(p^{*})}\lVert^2 + \kappa} & \ell \neq k \\ 0  & \ell =k\end{cases} 
\end{equation}

Here, \( n_k \) represents the number of neighbors associated with agent \( k \). The parameter \( N \), which satisfies \( N \geq 1 \), determines the smallest possible positive entry in the mixing matrices. This minimum entry is given by \( \frac{1}{(K-1)N + 1} \), where \( K \) denotes the total number of agents. Furthermore, \( \min_{\ell}\left(\dbtilde{\bm{a}}_{\ell k, i}^{(p^{*})+}\right) \) refers to the smallest positive value of \( \dbtilde{\bm{a}}_{\ell k, i}^{(p^{*})} \) among all indices \( \ell \). It can be verified that under this construction:
\begin{equation}
    \bm{a}_{\ell k, i}^{(p)} \geq 0, \quad \sum_{\ell \in \mathcal{N}_k} \bm{a}_{\ell k, i}^{(p)}=1, \quad \bm{a}_{\ell k, i}^{(p)}=0 \text { if } \ell \notin \mathcal{N}_k
\end{equation}
The effect of these combination weights is that parameter deviations are penalized in relation to their effect on the output of the neural network.

\section{Convergence Analysis}
We first need to relate the connectivity of the underlying graph topology captured in $C = [c_{\ell k}]$ to the connectivity of the induced combination matrices \( \bm{A}_{i}^{(p)} \triangleq \left[\bm{a}_{\ell k, i}^{(p)}\right] \).

\begin{assumption}[Strongly-connected graph]\label{assp: sc}
    The graph described by the weight matrix $C = [c_{\ell k}]$ is strongly-connected, implying that \( C \) is a primitive matrix.
    \qed
\end{assumption}



\begin{lemma}[Graph-compatible $\bm{A}_{i}^{(p)}$]\label{lem:graph_compatible}
    Under Assumption \ref{assp: sc} and the construction \eqref{eqn:matrixCons}, the graph represented by the weighted combination matrix \( \bm{A}_{i}^{(p)} \triangleq \left[\bm{a}_{\ell k, i}^{(p)}\right] \) is compatible with the graph described by \( C \) for all \( p \) and all \( i \) in the sense that:
    \begin{equation}
        \label{eqn:graph_compatible}
        \bm{a}_{\ell k, i}^{(p)} > 0 \text{ for all } \ell \in \mathcal{N}_k, \qquad \bm{a}_{\ell k, i}^{(p)} = 0 \text{ for all } \ell \notin \mathcal{N}_k.
    \end{equation}
    Moreover, for all \( p \) and all \( i \), the nonzero elements in the mixing matrices are lower bounded as follows:
    \begin{equation}
        \label{eqn:lower_bound}
        \bm{a}_{\ell k, i}^{(p)} > \frac{1}{(K-1)N + 1} \text{ whenever } \bm{a}_{\ell k, i}^{(p)} > 0
    \end{equation}
    \qed
\end{lemma}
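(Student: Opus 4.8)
The plan is to fix an agent $k$, a layer $p$, and an iteration $i$, and to track how the two-stage construction in~\eqref{eqn:matrixCons} turns the raw scores $\dbtilde{\bm{a}}_{\ell k, i}^{(p)}$ into the normalized weights $\bm{a}_{\ell k, i}^{(p)}$, working one column at a time. I would first settle the support claim~\eqref{eqn:graph_compatible}. For $\ell \neq k$ the raw score is the product of $c_{\ell k}$ with a strictly positive factor: its numerator $2^{L+1}\prod_p(1+\cdots)$ is at least $2^{L+1}$ because each factor is at least $1$, and its denominator is at least $\kappa>0$. Hence $\dbtilde{\bm{a}}_{\ell k,i}^{(p)} > 0$ exactly when $c_{\ell k}>0$, i.e. when $\ell \in \mathcal{N}_k\setminus\{k\}$, and vanishes for $\ell \notin \mathcal{N}_k$. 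Since $\min(0,Nm)=0$ while the minimum of two positive numbers is positive, the cap preserves this zero/positive pattern off the diagonal, and the diagonal rule yields a strictly positive self-score because $c_{kk}>0$ and the off-diagonal scores sum to something positive. Normalizing by the positive column sum leaves the pattern intact, giving~\eqref{eqn:graph_compatible}; together with Assumption~\ref{assp: sc} this transfers strong connectivity from $C$ to $\bm{A}_i^{(p)}$.

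For the lower bound~\eqref{eqn:lower_bound} I would introduce $m \triangleq \min_\ell\big(\dbtilde{\bm{a}}_{\ell k,i}^{(p)+}\big)$, the smallest positive raw score in the column. The purpose of the cap $\min(\cdot,Nm)$ is precisely to compress the dynamic range: every positive off-diagonal clipped score then lies in $[m,Nm]$, the lower end because $\dbtilde{\bm{a}}_{\ell k,i}^{(p)}\ge m$ and $Nm\ge m$ for $N\ge 1$, and the upper end by the cap itself. Since column $k$ has at most $K$ nonzero entries, one of which equals $m$ while each of the remaining entries is at most $Nm$, the normalizing denominator obeys $\sum_{\ell\in\mathcal{N}_k}\tilde{\bm{a}}_{\ell k,i}^{(p)}\le m+(K-1)Nm$. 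Dividing any off-diagonal weight, whose numerator is at least $m$, by this denominator gives $\bm{a}_{\ell k,i}^{(p)}\ge m/\big(m+(K-1)Nm\big)=1/\big((K-1)N+1\big)$, with the inequality strict whenever the column is not saturated.

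The self-weight admits a separate and cleaner treatment. Substituting the diagonal rule of~\eqref{eqn:matrixCons} into the normalization, the off-diagonal sum $S=\sum_{\ell\in\mathcal{N}_k\setminus\{k\}}\tilde{\bm{a}}_{\ell k,i}^{(p)}$ factors out and cancels, leaving the closed form $\bm{a}_{kk,i}^{(p)}=c_{kk}/(n_k-1+c_{kk})$, a constant independent of $i$, $p$, and the iterates. With $n_k\le K$, $N\ge 1$, and the diagonal penalty weight normalized so that $c_{kk}\ge 1/N$ (e.g. $c_{kk}=1$), this constant is bounded below by $1/\big((K-1)N+1\big)$, completing the argument.

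The step I expect to be the main obstacle is the denominator estimate, specifically the interaction between the self-score and the clipped off-diagonal scores. The cap acts only on the off-diagonal entries; whether the diagonal term $\tfrac{c_{kk}}{n_k-1}S$ also respects the $Nm$ ceiling, so that it may be folded into the denominator bound alongside the off-diagonal entries while simultaneously staying above the threshold from below, is exactly what the averaging form of the diagonal rule is engineered to guarantee, and its validity depends on the scale of $c_{kk}$. Verifying this, and checking that degenerate configurations such as $N=1$ on a complete neighborhood remain consistent with the stated strict inequality, is where the remaining care is needed.
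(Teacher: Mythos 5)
Your proof takes the same route as the paper's, which is essentially forced here: the paper's own proof is a two-sentence sketch that asserts the support pattern follows from \eqref{eqn:matrixCons} and then names exactly the extremal column configuration you compute --- one entry equal to the smallest positive score \( m \) with all others capped at \( Nm \), giving the ratio \( m/(m+(K-1)Nm) = 1/((K-1)N+1) \). Your fleshed-out version is correct where the paper is silent: the support argument, the clipping range \( [m, Nm] \) for every positive off-diagonal clipped score, the denominator estimate, and the closed form \( \bm{a}_{kk,i}^{(p)} = c_{kk}/(n_k - 1 + c_{kk}) \) for the self-weight are all right.

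The obstacle you flag at the end is genuine, and the paper simply glosses over it. With only \( c_{\ell k} > 0 \) for \( \ell \in \mathcal{N}_k \) assumed, the lower bound \eqref{eqn:lower_bound} can fail at the diagonal: taking \( c_{kk} \to 0 \) drives the self-weight \( c_{kk}/(n_k-1+c_{kk}) \) below any fixed threshold, while taking \( c_{kk} \) large inflates the normalizing denominator by the factor \( 1 + c_{kk}/(n_k-1) \) and pushes the off-diagonal weights below the threshold. Some normalization such as \( c_{kk} = 1 \) is implicitly required; under it your program closes completely, since the denominator is at most \( m\bigl(1+(n_k-2)N\bigr)\frac{n_k}{n_k-1} \), and the inequality \( \bigl(1+(n_k-2)N\bigr)\frac{n_k}{n_k-1} \le 1+(n_k-1)N \le 1+(K-1)N \) holds precisely when \( N \ge 1 \), with the self-weight becoming \( 1/n_k \ge 1/((K-1)N+1) \). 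Your strictness worry is also warranted: with \( c_{kk}=1 \), \( N=1 \), and \( n_k = K \), every entry of the column equals \( 1/K = \frac{1}{(K-1)N+1} \), so the bound is attained and \eqref{eqn:lower_bound} should read \( \ge \) rather than \( > \) --- consistent, in fact, with the paper's own proof, which describes the equality-attaining configuration while the lemma statement claims strict inequality. In short: your argument matches the paper's and is more careful; the conditional step you identify is not a defect of your proof but a gap in the lemma as stated.
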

\begin{proof}
    From Equation \eqref{eqn:matrixCons}, it follows that for any \( \ell \) and \( k \), \( \bm{a}_{\ell k, i}^{(p)} = 0 \) if and only if \( c_{\ell k} = 0 \). This establishes that the graph represented by \( \bm{A}_{i}^{(p)} \) is compatible with the graph described by \( C \), as stated in Equation \eqref{eqn:graph_compatible}. 
    
    Furthermore, the construction in Equation \eqref{eqn:matrixCons} ensures that the smallest positive element in the mixing matrices is given by $\frac{1}{(K-1)N + 1}$, which occurs when a column contains one unique smallest entry, and all other entries in the same column are \( N \) times this smallest entry.
\end{proof}

To account for the stochastic gradient dynamics, we employ the following assumptions on the gradients and their stochastic approximations.

\begin{assumption}[Lipschitz gradients] \label{assup:lip}
    For each $k$, the gradient $\nabla J_k(\cdot)$ is Lipschitz, namely, for any $x, y \in \mathbb{R}^M$:
    \begin{equation}
        \left\|\nabla J_k(x)-\nabla J_k(y)\right\| \leq \delta\|x-y\|
    \end{equation}
    \qed
\end{assumption}

\begin{assumption}[Bounded gradients]\label{asup:bg}
    For each $k$, the gradient $\nabla J_k(\cdot)$ is bounded, namely, for any $x \in \mathbb{R}^M$ :
    \begin{equation}
        \left\|\nabla J_k(x)\right\| \leq G
        \label{eqn:bg}
    \end{equation}
    \qed
\end{assumption}

\begin{assumption}[Gradient noise process]\label{asup:gn[]}
    For each $k$, the gradient noise process is defined as
    \begin{equation}
    \boldsymbol{s}_{k, i}\left(\boldsymbol{w}_{k, i-1}\right)=\widehat{\nabla J}_k\left(\boldsymbol{w}_{k, i-1}\right)-\nabla J_k\left(\boldsymbol{w}_{k, i-1}\right)
    \end{equation}
    and satisfies
    \begin{equation}
    \label{eqn:gd}
    \begin{aligned}
    \mathbb{E}\left[\bm{s}_{k, i}\left(\boldsymbol{w}_{k, i-1}\right) \mid \boldsymbol{w}_{k, i-1}\right] & =0 \\
    \mathbb{E}\left[\left\|\boldsymbol{s}_{k, i}\left(\boldsymbol{w}_{k, i-1}\right)\right\|^2 \mid \boldsymbol{w}_{k, i-1}\right] & \leq \sigma_k^2
    \end{aligned}
    \end{equation}
    for some non-negative constants $\sigma_k^2$. We also assume that the gradient noise processes are pairwise uncorrelated, i.e.:
    \begin{align}
        \mathbb{E}\left\{\boldsymbol{s}_{k, i}\left(\boldsymbol{w}_{k, i-1}\right) \boldsymbol{s}_{\ell, i}\left(\boldsymbol{w}_{\ell, i-1}\right)^{\top} \mid \bm{w}_{k,i-1}, \bm{w}_{\ell,i-1}\right\} = 0
        \label{eqn:uncorrelated}
    \end{align}
\qed
\end{assumption}

\subsection{Network Centroid}
To account for the time-varying characteristics of the mixing matrices in the proposed algorithm, we introduce the following lemma, which is a minor variation of Lemma 5.2.1 in~\cite{Tsitsiklis84}.
\begin{lemma}[Time-varying weight vector~\cite{Tsitsiklis84}]
    Under Assumption \ref{assp: sc} and construction \eqref{eqn:matrixCons}, there exists a sequence of random weight vectors \( \{\bm{\phi}_i\} \) such that for $t \ge i$:  
    \begin{align}
    \label{eqn:phi_definition}
    \left\|\bm{A}_t^{\top} \bm{A}_{t-1}^{\top} \cdots \bm{A}_i^{\top} - \mathds{1} \bm{\phi}_i^{\top}\right\|^2 \leq C \xi^{t-i},
    \end{align}  
    where the constants satisfy \( C > 0 \) and \( \xi \in (0, 1) \).   
    
    Additionally, the sequence satisfies the following recursive relationship:  
    \begin{align}
    \label{eqn:phi_A}
    \bm{\phi}_i = \bm{A}_i \bm{\phi}_{i+1}.
    \end{align}  
    \qed
\end{lemma}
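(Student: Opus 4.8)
The plan is to recognize this as a standard \emph{weak-to-strong ergodicity} statement for inhomogeneous products of stochastic matrices, specialized to the random, time-varying mixing matrices at hand. Since each column of $\bm{A}_i$ sums to one, the transpose $\bm{A}_i^{\top}$ is row-stochastic and satisfies $\bm{A}_i^{\top}\mathds{1}=\mathds{1}$; consequently the product $\bm{B}_{i:t}\triangleq\bm{A}_t^{\top}\bm{A}_{t-1}^{\top}\cdots\bm{A}_i^{\top}$ appearing in the claim is itself row-stochastic and keeps $\mathds{1}$ as a right eigenvector. The target $\mathds{1}\bm{\phi}_i^{\top}$ is a rank-one row-stochastic matrix with identical rows, so establishing the bound is equivalent to showing that the rows of $\bm{B}_{i:t}$ collapse onto a common limiting vector $\bm{\phi}_i$ at a geometric rate that is uniform over $i$ and over the random realization.

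First I would extract the two deterministic, realization-independent facts supplied by Lemma~\ref{lem:graph_compatible}: every $\bm{A}_i$ shares the fixed sparsity pattern of the primitive matrix $C$, and each of its positive entries is bounded below by $\delta_0\triangleq\frac{1}{(K-1)N+1}$, for all $p$, all $i$, and all realizations. Because $C$ is primitive (Assumption~\ref{assp: sc}), there exists a finite integer $\nu$ with $C^{\nu}>0$ entrywise; combining this with the entrywise lower bound shows that any product of $\nu$ consecutive factors, $\bm{A}_{j+\nu-1}^{\top}\cdots\bm{A}_j^{\top}$, is strictly positive with every entry at least $\delta_0^{\nu}$. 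Crucially, both $\nu$ and $\delta_0$ are the same for every block and every sample path, which is precisely the ingredient that upgrades the classical deterministic argument of \cite{Tsitsiklis84} to the present random, time-varying setting.

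Next I would introduce Dobrushin's coefficient of ergodicity,
\begin{equation}
    \tau(P)\triangleq 1-\min_{m,n}\sum_{j}\min\{P_{mj},P_{nj}\},
\end{equation}
which is submultiplicative, $\tau(P_1P_2)\le\tau(P_1)\tau(P_2)$, and which vanishes exactly when all rows of $P$ coincide. For a strictly positive row-stochastic matrix with entries at least $\delta_0^{\nu}$ one has $\tau\le 1-K\delta_0^{\nu}\triangleq\lambda\in[0,1)$, so grouping the factors of $\bm{B}_{i:t}$ into $\lfloor (t-i)/\nu\rfloor$ blocks of length $\nu$ and applying submultiplicativity yields $\tau(\bm{B}_{i:t})\le\lambda^{\lfloor (t-i)/\nu\rfloor}$. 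This geometric contraction makes the rows of $\bm{B}_{i:t}$ a Cauchy sequence as $t\to\infty$, so the limit exists and is rank-one; I define $\bm{\phi}_i$ through $\mathds{1}\bm{\phi}_i^{\top}\triangleq\lim_{t\to\infty}\bm{B}_{i:t}$. Converting the ergodic-coefficient bound into a matrix-norm bound (all norms on a finite-dimensional space being equivalent) and setting $\xi\triangleq\lambda^{2/\nu}$ then delivers \eqref{eqn:phi_definition}. Finally, the recursion \eqref{eqn:phi_A} follows by peeling the rightmost factor, writing $\bm{B}_{i:t}=\bm{B}_{i+1:t}\bm{A}_i^{\top}$, and passing to the limit, which gives $\mathds{1}\bm{\phi}_i^{\top}=\mathds{1}(\bm{A}_i\bm{\phi}_{i+1})^{\top}$.

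The main obstacle I anticipate is \emph{uniformity}: because the mixing matrices are data-dependent and vary with $i$, a naive per-step spectral-gap argument would produce contraction factors that themselves fluctuate and could approach one. The resolution is to avoid eigenvalue arguments entirely and rely on the scrambling/Dobrushin machinery, whose contraction constant $\lambda$ depends only on the uniform quantities $\delta_0$, $K$, and $\nu$ guaranteed by Lemma~\ref{lem:graph_compatible} and the primitivity of $C$. A secondary technical point worth handling with care is the column- versus row-stochastic bookkeeping induced by the transposition, together with a clean justification that the limiting rank-one structure---rather than merely the decay of row differences---actually exists.
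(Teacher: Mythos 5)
Your proposal is correct: the paper supplies no in-line proof of this lemma, deferring entirely to Lemma 5.2.1 of the cited Tsitsiklis reference, and your Dobrushin/scrambling-coefficient argument is precisely the standard route to that result, with the realization-independent ingredients (the uniform lower bound $\frac{1}{(K-1)N+1}$ from Lemma~\ref{lem:graph_compatible} and primitivity of $C$ giving a block length $\nu$ with $C^{\nu}>0$) correctly identified as what extends the deterministic argument to the random, time-varying matrices here. Only cosmetic details differ from a fully rigorous writeup (the block count is $\lfloor (t-i+1)/\nu \rfloor$, and the existence of the rank-one limit should invoke the backward-product structure $\bm{B}_{i:t+1}=\bm{A}_{t+1}^{\top}\bm{B}_{i:t}$, so that rows of each new product are convex combinations of the previous rows and column spreads decrease monotonically, which you gesture at via the Cauchy claim), so your approach coincides in substance with the paper's cited proof.
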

\noindent We can then define the network centroid as:
\begin{align}
   \bm{w}_{c, i} \triangleq \sum_{k=1}^K [\bm{\phi}_{i}]_k \bm{w}_{k, i}
\end{align}
where \( [\bm{\phi}_{i}]_k \) denotes the \( k \)-th entry of the time-varying weight vector in \eqref{eqn:phi_definition}.

\subsection{Network Disagreement and Descent}
\begin{lemma}[Network Disagreement] Under Assumptions \ref{assp: sc}-\ref{asup:gn[]} and for sufficiently small step-sizes $\mu$, the network disagreement is bounded as:\label{lem:disg}
 \begin{equation}
    \label{eqn:disg2nd}
         \sum_{k=1}^K \mathbb{E}\left\|\bm{w}_{k, i} - \bm{w}_{c,i} \right\|^2 \leq \mu^2 \frac{4C}{\left(1-\sqrt{\xi}\right)^2} K \left(G^2+\max_k \sigma_k^2\right)
\end{equation}
\qed
\end{lemma}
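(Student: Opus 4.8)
The plan is to unroll the coupled recursion into an explicit representation of each local iterate as a weighted accumulation of past stochastic gradients, and then to invoke the mixing bound \eqref{eqn:phi_definition} to show that the deviation of each iterate from the centroid is governed by a geometric series in the step-size. I would carry the argument out one layer at a time, since $\|\bm{w}_{k,i}-\bm{w}_{c,i}\|^2=\sum_{p}\|\bm{w}_{k,i}^{(p)}-\bm{w}_{c,i}^{(p)}\|^2$, so it suffices to bound the layer-$p$ disagreement with the matrices $\bm{A}_i^{(p)}$ and then sum over $p$. Stacking the agents' layer-$p$ iterates into $\bm{\mathcal{W}}_i^{(p)}$, the combination step reads $\bm{\mathcal{W}}_i^{(p)}=(\bm{A}_i^{(p)\top}\otimes I)\big(\bm{\mathcal{W}}_{i-1}^{(p)}-\mu\,\widehat{\nabla\mathcal{J}}_i^{(p)}\big)$, which unrolls to $\bm{\mathcal{W}}_i^{(p)}=(P_{i,1}\otimes I)\bm{\mathcal{W}}_0^{(p)}-\mu\sum_{j=1}^{i}(P_{i,j}\otimes I)\widehat{\nabla\mathcal{J}}_j^{(p)}$, where $P_{i,j}\triangleq\bm{A}_i^{(p)\top}\cdots\bm{A}_j^{(p)\top}$ is precisely the product controlled by \eqref{eqn:phi_definition}.

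Next I would observe that the aggregate disagreement equals $\sum_k\|\bm{w}_{k,i}^{(p)}-\bm{w}_{c,i}^{(p)}\|^2=\big\|\big((I-\mathds{1}\bm{\phi}_i^{\top})\otimes I\big)\bm{\mathcal{W}}_i^{(p)}\big\|^2$, since $\bm{w}_{c,i}^{(p)}=\sum_k[\bm{\phi}_i]_k\bm{w}_{k,i}^{(p)}$. Substituting the unrolled expression, the crucial step is to push the projector $I-\mathds{1}\bm{\phi}_i^{\top}$ into each $P_{i,j}$. Writing $P_{i,j}=\mathds{1}\bm{\phi}_j^{\top}+E_{i,j}$ with $\|E_{i,j}\|^2\le C\xi^{i-j}$ from \eqref{eqn:phi_definition}, and using $(I-\mathds{1}\bm{\phi}_i^{\top})\mathds{1}=\mathds{1}-\mathds{1}(\bm{\phi}_i^{\top}\mathds{1})=0$, I obtain $(I-\mathds{1}\bm{\phi}_i^{\top})P_{i,j}=(I-\mathds{1}\bm{\phi}_i^{\top})E_{i,j}$. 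This collapses the intractable product onto the geometrically decaying residual and simultaneously disposes of the initialization term, either by taking a common initialization $\bm{w}_{k,0}=\bm{w}_0$ or by noting that its contribution decays as $\xi^{i}$ and is dominated by the step-size terms for small $\mu$.

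It then remains to split $\widehat{\nabla J}_\ell=\nabla J_\ell+\bm{s}_{\ell,j}$ and bound the two pieces. For the deterministic part, Assumption \ref{asup:bg} gives $\|\nabla J_\ell\|\le G$, so a triangle inequality across $j$ with $\|E_{i,j}\|\le\sqrt{C}(\sqrt{\xi})^{\,i-j}$ produces the geometric series $\sqrt{C}\sum_j(\sqrt{\xi})^{i-j}\le\sqrt{C}/(1-\sqrt{\xi})$, which upon squaring yields $\mu^2\frac{C}{(1-\sqrt{\xi})^2}KG^2$. For the stochastic part, the zero-mean, pairwise-uncorrelated structure of Assumption \ref{asup:gn[]} kills the cross terms, so $\mathbb{E}\|\mu\sum_j(\cdots)\bm{s}_{\ell,j}\|^2=\mu^2\sum_j\mathbb{E}\|(\cdots)\bm{s}_{\ell,j}\|^2\le\mu^2\frac{C}{1-\xi}K\max_k\sigma_k^2$; since $1-\xi\ge 1-\sqrt{\xi}\ge(1-\sqrt{\xi})^2$, this is at most $\mu^2\frac{C}{(1-\sqrt{\xi})^2}K\max_k\sigma_k^2$. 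Adding the two contributions, absorbing the bounded projector norm into the constant, and summing over the layers $p$ reproduces the stated bound with factor $4C/(1-\sqrt{\xi})^2$.

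The main obstacle is that the combination matrices are time-varying and data-dependent: they do not commute and admit no fixed eigendecomposition, so the whole argument must replace spectral reasoning by the product-convergence estimate \eqref{eqn:phi_definition}. The delicate point is the annihilation $(I-\mathds{1}\bm{\phi}_i^{\top})\mathds{1}\bm{\phi}_j^{\top}=0$, which holds purely because $\bm{\phi}_i^{\top}\mathds{1}=1$, and therefore remains valid even though the centroid weight $\bm{\phi}_i$ need not coincide with the limiting vector $\bm{\phi}_j$ of the product $P_{i,j}$ (indeed it works for every layer's limiting vector, resolving the single-$\bm{\phi}_i$ versus per-layer mismatch). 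The remaining care is bookkeeping: the deterministic term must be summed with the square-root rate $\sqrt{\xi}$, yielding $(1-\sqrt{\xi})^{-2}$, while the uncorrelated-noise term is summed with rate $\xi$, and both must be verified to fit under the single constant. Bounded gradients are what make this possible, since they remove any Lipschitz coupling back to the disagreement itself, so that no recursive contraction—and hence no explicit dependence on $\delta$—is required beyond the smallness of $\mu$ assumed in the statement.
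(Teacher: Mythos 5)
Your overall architecture is the same as the paper's: you unroll the stacked recursion, split each matrix product as $P_{i,j}=\mathds{1}\bm{\phi}_j^{\top}+E_{i,j}$ with $\|E_{i,j}\|\le\sqrt{C}\,\xi^{(i-j)/2}$ from \eqref{eqn:phi_definition}, annihilate the rank-one part against the centroid projector, and sum a geometric series using the bounded-gradient and bounded-noise assumptions; the paper's appendix arrives at exactly the same unrolled expression via the consistency relation \eqref{eqn:phi_A}. (Two small remarks on your bookkeeping: by \eqref{eqn:phi_A} one has $\bm{\phi}_i^{\top}P_{i-1,n}=\bm{\phi}_n^{\top}$, hence $\bm{\phi}_i^{\top}E_{i,j}=0$ and $(I-\mathds{1}\bm{\phi}_i^{\top})E_{i,j}=E_{i,j}$ exactly, so no projector-norm slack needs to be ``absorbed'' — which matters, since $\|I-\mathds{1}\bm{\phi}_i^{\top}\|$ can be as large as $1+\sqrt{K}$ and absorbing it would not reproduce the stated constant $4C$. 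Also, the paper handles the initialization term not by common initialization but by restricting to $i\ge i_o = o(\mu^{-1})$, which is where its factor $2$, hence the $4C$ after squaring, comes from.)

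The genuine gap is your treatment of the noise term. The claimed identity $\mathbb{E}\|\mu\sum_j(\cdots)\bm{s}_{\ell,j}\|^2=\mu^2\sum_j\mathbb{E}\|(\cdots)\bm{s}_{\ell,j}\|^2$ does not follow from Assumption \ref{asup:gn[]}: the uncorrelatedness in \eqref{eqn:uncorrelated} is across \emph{agents} at a common time, not across time, and the usual martingale-difference argument for cross-time cancellation fails here for precisely the reason you yourself identify as the main obstacle — the combination matrices are data-dependent. The matrix $E_{i,j'}$ is built from $\bm{A}_n^{(p)}$ for $n$ up to $i-1$, and those matrices are functions of the iterates, hence of the noise $\bm{s}_{j'}$ itself; so in the cross term $\mathbb{E}\bigl[\bm{s}_j^{\top}E_{i,j}^{\top}E_{i,j'}\bm{s}_{j'}\bigr]$ you cannot condition and invoke $\mathbb{E}[\bm{s}_{j'}\mid\bm{w}_{\cdot,j'-1}]=0$, because the matrices multiplying $\bm{s}_{j'}$ are not measurable with respect to the past. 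The fix is standard and costs nothing: bound pathwise by weighted Cauchy--Schwarz,
\begin{equation}
    \Bigl\|\sum_{j} E_{i,j}\,g_j\Bigr\|^2 \;\le\; \sqrt{C}\Bigl(\sum_{j}\xi^{\frac{i-j}{2}}\Bigr)\sqrt{C}\sum_{j}\xi^{\frac{i-j}{2}}\|g_j\|^2,
\end{equation}
then take expectations using $\mathbb{E}\|g_j\|^2\le 2K\left(G^2+\max_k\sigma_k^2\right)$; this yields the factor $(1-\sqrt{\xi})^{-2}$ for the noise term as well, which is all the stated bound requires — your orthogonality step only bought the sharper $1/(1-\xi)$, which you then discarded anyway. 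Alternatively, the paper sidesteps second-moment cross terms altogether: its appendix applies the triangle inequality across time to the \emph{first} moment, bounds $\mathbb{E}\|\widehat{\nabla\mathcal{J}}(\bcw_n)\|\le\sqrt{K(G^2+\max_k\sigma_k^2)}$, and obtains $\mathbb{E}\|\bcw_i-\bcw_{c,i}\|\le 2\mu\sqrt{C}(1-\sqrt{\xi})^{-1}\sqrt{K(G^2+\max_k\sigma_k^2)}$ for $i\ge i_o$, of which \eqref{eqn:disg2nd} is the square.
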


\begin{proof}

    The argument uses techniques from~\cite{Vlaski21P1} to establish clustering. It is omitted due to space limitations.
\end{proof}
Having established clustering of the agents around a suitably chosen reference point, we can now establish convergence to a first-order stationarity. For simplicity, for this theorem only, we restrict ourselves to the setting of IID data, meaning that agents observe data that is identically distributed. Formally:
\begin{assumption}[Common Objective Functions]\label{assp:iid} The local data distributions are identical, and hence \( J_k(w) = J_{\ell}(w)\) for all \( k \) and \( \ell \).
\end{assumption}
\begin{theorem}[Descent Relation]\label{thm:descent}
    Under Assumptions \ref{assp: sc}-\ref{assp:iid} and for sufficiently small step-sizes, we have:
    \begin{equation}
        \label{eqn:descentRel}
        \begin{aligned}
            & \mathbb{E}\left\{J\left(\boldsymbol{w}_{c, i}\right) \mid \boldsymbol{w}_{c, i-1}\right\} 
            \leq J\left(\boldsymbol{w}_{c, i-1}\right) 
        \end{aligned}
    \end{equation}
    whenever
    \begin{equation}
    \left\|\nabla J\left(\boldsymbol{w}_{c, i-1}\right)\right\|^2 \geq \mu \frac{c_2}{c_1} \left(1+\frac{1}{\pi}\right)
    \end{equation}
    where \( c_1  \) and \( c_2 \) are constants independent of \( \mu \), and $0 < \pi < 1$ is a parameter to be chosen.
    \qed
\end{theorem}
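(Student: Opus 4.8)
The plan is to show that the network centroid \( \bm{w}_{c,i} \) evolves as an inexact stochastic gradient descent iteration on the aggregate risk \( J \), and then to invoke smoothness to extract the descent relation. First I would derive a recursion for the centroid. Stacking the combination step \( \bm{w}_{k,i}^{(p)} = \sum_{\ell} \bm{a}_{\ell k,i}^{(p)} \bm{\psi}_{\ell,i}^{(p)} \) and using the invariance property \( \bm{\phi}_i = \bm{A}_i \bm{\phi}_{i+1} \) from the time-varying weight vector lemma to absorb the mixing matrices against the definition of \( \bm{w}_{c,i} \), the combination step drops out and the centroid reduces to a convex combination of the intermediate iterates \( \bm{\psi}_{\ell,i} \). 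Substituting \( \bm{\psi}_{\ell,i} = \bm{w}_{\ell,i-1} - \mu \widehat{\nabla J}_\ell(\bm{w}_{\ell,i-1}) \) then yields a recursion of the form
\[
    \bm{w}_{c,i} = \bm{w}_{c,i-1} - \mu \sum_{k=1}^K [\bm{\phi}_i]_k \widehat{\nabla J}_k(\bm{w}_{k,i-1}),
\]
so that the centroid descends along a \( \bm{\phi}_i \)-weighted average of local stochastic gradients, each evaluated at the disagreed iterate \( \bm{w}_{k,i-1} \).

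Next I would apply the descent lemma for \( \delta \)-smooth functions (Assumption~\ref{assup:lip}) to \( J \), giving
\[
    J(\bm{w}_{c,i}) \leq J(\bm{w}_{c,i-1}) + \nabla J(\bm{w}_{c,i-1})^{\top}(\bm{w}_{c,i} - \bm{w}_{c,i-1}) + \tfrac{\delta}{2}\|\bm{w}_{c,i} - \bm{w}_{c,i-1}\|^2,
\]
and take the conditional expectation given \( \bm{w}_{c,i-1} \). The gradient-noise contribution in the linear term vanishes by the zero-mean property in Assumption~\ref{asup:gn[]}, leaving the inner product against the \( \bm{\phi}_i \)-weighted \emph{true} gradients. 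The quadratic term is bounded at order \( O(\mu^2) \) using the bounded-gradient condition (Assumption~\ref{asup:bg}) together with the bounded noise variance and pairwise uncorrelatedness of Assumption~\ref{asup:gn[]}; this produces the \( \mu^2 \)-scaled constant contribution.

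The mismatch in the linear term is where the IID hypothesis (Assumption~\ref{assp:iid}) is essential: since \( J_k = J \) for every agent, \( \sum_k [\bm{\phi}_i]_k \nabla J_k(\bm{w}_{c,i-1}) = \nabla J(\bm{w}_{c,i-1}) \) because \( \bm{\phi}_i \) is a probability vector, so the nonuniform weights cause no bias. Writing \( \sum_k [\bm{\phi}_i]_k \nabla J_k(\bm{w}_{k,i-1}) = \nabla J(\bm{w}_{c,i-1}) + \bm{e}_i \), Lipschitz continuity bounds \( \|\bm{e}_i\| \) by \( \delta \) times the disagreement, and Lemma~\ref{lem:disg} controls \( \sum_k \mathbb{E}\|\bm{w}_{k,i-1} - \bm{w}_{c,i-1}\|^2 \) at order \( O(\mu^2) \). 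I would then split the cross term \( -\mu \nabla J(\bm{w}_{c,i-1})^{\top} \bm{e}_i \) with Young's inequality using the free parameter \( \pi \in (0,1) \), absorbing \( \tfrac{\pi}{2}\|\nabla J\|^2 \) into the descent term and leaving an error scaled by \( \tfrac{1}{\pi} \). Collecting the \( O(\mu) \) descent term and the two \( O(\mu^2) \) error contributions yields
\[
    \mathbb{E}\{J(\bm{w}_{c,i}) \mid \bm{w}_{c,i-1}\} \leq J(\bm{w}_{c,i-1}) - \mu c_1 \|\nabla J(\bm{w}_{c,i-1})\|^2 + \mu^2 c_2 \Big(1 + \tfrac{1}{\pi}\Big),
\]
with \( c_1, c_2 \) independent of \( \mu \), and the claimed descent follows precisely when the last two terms cancel, i.e. when \( \|\nabla J(\bm{w}_{c,i-1})\|^2 \geq \mu \tfrac{c_2}{c_1}(1 + \tfrac{1}{\pi}) \).

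The hard part will be the centroid recursion itself. Because the combination weights \( \bm{a}_{\ell k,i}^{(p)} \) are simultaneously time-varying and layer-dependent, the invariance \( \bm{\phi}_i = \bm{A}_i \bm{\phi}_{i+1} \) must be applied with careful index matching — the time-varying weight vector lemma is phrased for a single matrix sequence \( \bm{A}_i \), whereas the algorithm mixes per-layer matrices \( \bm{A}_i^{(p)} \) — and one must verify that the combination step cancels cleanly so that no residual drift of order \( O(\mu^0) \) or \( O(\mu) \) survives in the recursion. Establishing this clean cancellation, and confirming that the \( \bm{\phi}_i \)-weighted gradient error \( \bm{e}_i \) is genuinely controlled by the \( O(\mu) \) disagreement so that the cross term enters at the claimed order, is the crux; the smoothness estimate and the Young's-inequality bookkeeping that follow are routine.
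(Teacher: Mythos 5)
Your proposal is correct and follows essentially the same route as the paper's appendix proof: the centroid recursion $\bm{w}_{c,i} = \bm{w}_{c,i-1} - \mu\sum_{k=1}^K[\bm{\phi}_{i-1}]_k\widehat{\nabla J}_k(\bm{w}_{k,i-1})$ (the paper's weights carry index $i-1$ via $\bm{\phi}_{i-1}=\bm{A}_{i-1}\bm{\phi}_{i}$, where you wrote $\bm{\phi}_i$ — exactly the index-matching point you flagged), the $\delta$-smoothness descent lemma, perturbation terms $\bm{d}_{i-1}$ and $\bm{s}_i$ bounded by $O(\mu^2)$ via the disagreement bound of Lemma~\ref{lem:disg} and by $\sigma^2$ via the gradient-noise assumption, and the $\pi$-split producing the threshold $\|\nabla J(\bm{w}_{c,i-1})\|^2 \geq \mu\frac{c_2}{c_1}\left(1+\frac{1}{\pi}\right)$. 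The only bookkeeping the paper adds that your sketch elides is that it conditions on the event $\bm{w}_{c,i-1}\in\mathcal{G}$ rather than pointwise on $\bm{w}_{c,i-1}$, so that the unconditional bound $\mathbb{E}\|\bm{d}_{i-1}\|^2 = O(\mu^2)$ can be converted to a conditional one at the cost of a factor $1/\pi_{i-1}^{\mathcal{G}}$, following (87) and (91) of the cited reference.
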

\begin{proof}
    The argument is analogous to~\cite{Vlaski21P2} and omitted due to space limitations.
\end{proof}

This theorem ensures the proposed algorithm is a decent recursion for the network centroid in the IID scenario whenever the network centroid $\bm{w}_{c,i}$ is not $O(\mu)$-stationary. Then from Crollollary 1 in \cite{nonconvex}, we can conclude that at some time $i^{\star} \leq O\left(1 / \mu^2\right)$, the network centroid will reach an $O(\mu)$-mean-square-stationary point.

\section{Simulation Results}
\label{sec:pagestyle}

\subsection{Simulation Setup}
Since the convergence guarantees are proven in the IID case, we aim to explore numerically whether DRT diffusion performs well in the non-IID case. For this purpose, we utilize the CIFAR-10 dataset for an image classification task. We consider a network of \( 16 \) agents, each equipped with a non-IID local dataset sampled from CIFAR-10 without replacement. To ensure the local datasets are non-IID, we first randomly select the number of image classes present at each agent, ranging between \( 5 \) and \( 8 \). Then, the total number of samples for each agent is randomly chosen to be between \( 1500 \) and \( 2000 \).
    
    To explore the performance of the DRT diffusion algorithm across different network topologies, we experiment with a ring, Erdős–Rényi model with the edge probability of \( 0.1 \), and Hypercube.
    
    We compare the performance of the DRT diffusion algorithm with the classical diffusion algorithm~\cite{Sayed:2014} using the optimal mixing matrix as described in \cite{Lin:2004}. Both algorithms are tested by locally training ResNet-20s for one epoch with a batch size of 128. After local training, the algorithms perform 3 consensus steps. The primary purpose of implementing consecutive consensus steps is to reduce the number of epochs needed to reach a steady state \cite{Kong:2021}.For the mixing matrix construction, $N$ is chosen to be $2K$.

\subsection{Results and Discussion}
        We compare performance based on steady-state performance, convergence rate and the generalization gap, which measures the difference between training and test accuracy across all experiments. In particular, we examine the feature of DRT diffusion that facilitates distinct local parameterizations.

        \begin{table}[htb]
    	\centering
    	\caption{Steady-state test accuracy with different topologies}
    	{\renewcommand{\arraystretch}{1}
    		\begin{tabular}{|c|c|c|c|}
    			\hline
    			Topology & $\lambda_2$ & Classical diffusion & DRT diffusion \\
                    
                    \hline
                    Ring & 0.949 & 66.21\% & 69.60\% \\
                    \hline
                    Erdős–Rényi & 0.905 & 65.55\% & 69.22\% \\
                    \hline
                    Hypercube & 0.600 & 71.05\% & 72.00\% \\
                    \hline
                    
    		\end{tabular}
    	}
    	\label{tbl:ssci}
    \end{table}
    
    \begin{figure}[htb]
    \begin{minipage}[b]{.48\linewidth}
      \centering
      \centerline{\includegraphics[scale=.28]{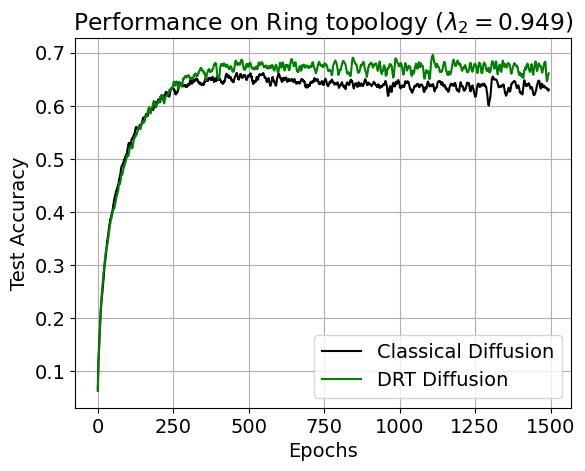}}
      \centerline{(a) Ring $\lambda_2=0.949$}\medskip
    \end{minipage}
    \begin{minipage}[b]{.48\linewidth}
      \centering
      \centerline{\includegraphics[scale=.28]{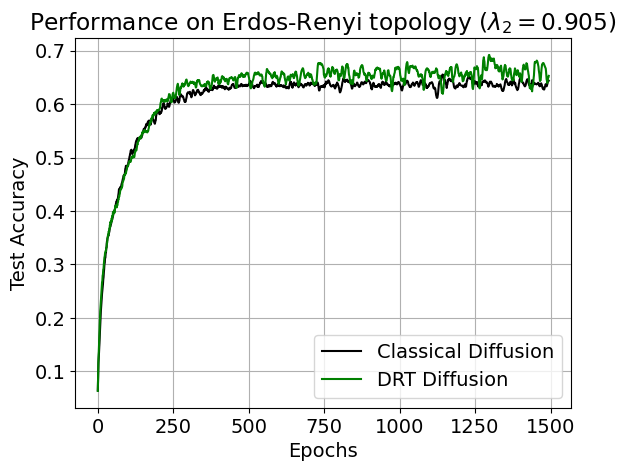}}
      \centerline{(b) Erdős–Rényi $\lambda_2=0.905$}\medskip
    \end{minipage}
    \centering
    \begin{minipage}[b]{.48\linewidth}
      \centering
      \centerline{\includegraphics[scale=.28]{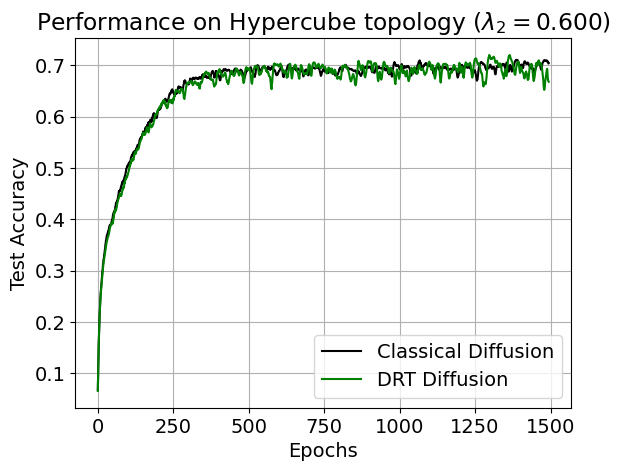}}
      \centerline{(c) Hypercube $\lambda_2=0.600$}\medskip
    \end{minipage}
    \caption{Learning curves for a decentralized network with 16 agents, employing ResNet-20 on CIFAR-10 with non-IID data at each agent}
    \label{fig:acc}
    \end{figure}
        
        \subsubsection{Steady-state Performance}
        The steady-state test accuracies are displayed in Table \ref{tbl:ssci}, and the learning curves along with generalization gaps are depicted in Figures \ref{fig:acc} and \ref{fig:cd}. The results in Table \ref{tbl:ssci} show that DRT diffusion surpasses classical diffusion in steady-state performance, especially in sparse topologies. However, for the well-connected Hypercube topology, the performance difference is minimal. Interestingly, DRT diffusion exhibits greater fluctuations than classical diffusion in the Hypercube, suggesting it may not be the optimal choice for well-connected topologies. These findings indicate that DRT diffusion, by promoting consensus in the function space, facilitates efficient information exchange among agents.
        
        \subsubsection{Convergence Rate}
        Figure \ref{fig:acc} shows that DRT diffusion achieves the same convergence rate as classical diffusion with the fast-mixing Metropolis rule. This is notable since the Metropolis rule is constructed to result in fast information diffusion, while the proposed DRT-based scheme is only designed to ensure equality in neural network outputs.

        \subsubsection{Generalization Gap}
        The generalization gap indicates how well the model performs on unseen data. Figure \ref{fig:cd} demonstrates that DRT diffusion has a smaller generalization gap compared to classical diffusion with sparse topologies. This suggests that the proposed algorithms has an implicit bias towards minimizers that generalize better, warranting further investigation along the lines of~\cite{Zhu:2022, Zhu:2023}.
        
    \begin{figure}[htb]
    \begin{minipage}[b]{.48\linewidth}
      \centering
      \centerline{\includegraphics[scale=.28]{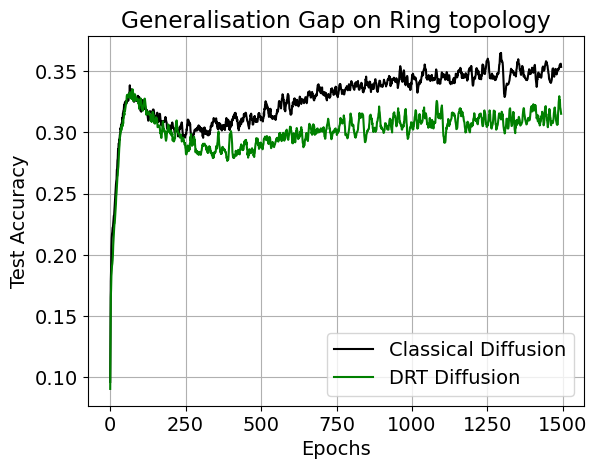}}
      \centerline{(a) Ring $\lambda_2=0.949$}\medskip
    \end{minipage}
    \begin{minipage}[b]{.48\linewidth}
      \centering
      \centerline{\includegraphics[scale=.28]{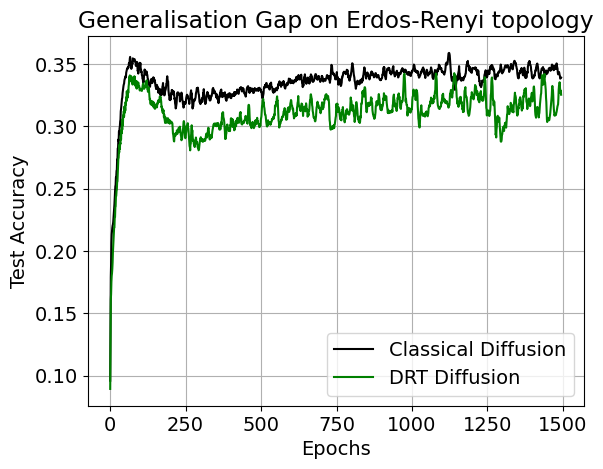}}
      \centerline{(b) Erdős–Rényi $\lambda_2=0.905$}\medskip
    \end{minipage}
    \centering
    \begin{minipage}[b]{.48\linewidth}
      \centering
      \centerline{\includegraphics[scale=.28]{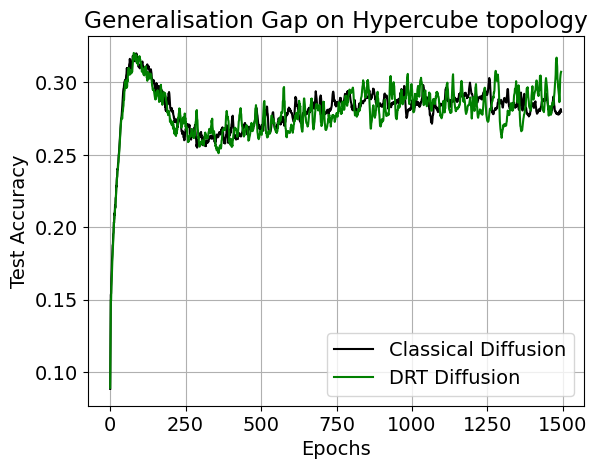}}
      \centerline{(c) Hypercube $\lambda_2=0.600$}\medskip
    \end{minipage}
    \caption{Generalization gap for a decentralized network with 16 agents, employing ResNet-20 on CIFAR-10 with non-IID data at each agent}
    \label{fig:cd}
    \end{figure}

\bibliographystyle{IEEEtran}
\bibliography{refs}

\clearpage

\appendix
\subsection{Proof of Lemma \ref{lem:disg}}
The DRT diffusion recursion for a specific layer of the neural network can be compactly expressed as:
\begin{equation}
    \bcw_{i} = \bm{\mathcal{A}}_{i-1}^{\top} \left( \bcw_{i-1} - \mu \widehat{\nabla \mathcal{J}} \left( \bcw_{i-1} \right) \right)
    \label{eqn:DRTlayer}
\end{equation}
To begin with, we study more closely the evolution of the individual estimates $\bm{w}_{k, i}$ relative to the network centroid $\bm{w}_{c, i}$. Multiplying (\ref{eqn:DRTlayer}) by $\left[I-\left(\mathds{1} \bm{\phi}_i^{\top} \otimes I\right)\right]$ from the left yields in light of (\ref{eqn:phi_A}):
\begin{equation}
    \begin{aligned}
        \label{eqn:disagEvo}
        & \left[I-\left(\mathds{1} \bm{\phi}_i^{\top} \otimes I\right)\right]\bcw_i \\
        = & \left[I-\left(\mathds{1} \bm{\phi}_i^{\top} \otimes I\right)\right] \bm{\mathcal{A}}_{i-1}^{\top} \left( \bcw_{i-1} - \mu \widehat{\nabla \mathcal{J}} \left( \bcw_{i-1} \right) \right) \\
        \stackrel{(\ref{eqn:phi_A})}{=} & \left(\bm{\mathcal{A}}_{i-1}^{\top}-\mathds{1}\bm{\phi}_{i-1}^{\top}\otimes I\right)  \left(\bcw_{i-1} - \mu  \widehat{\nabla \mathcal{J}}\left(\bcw_{i-1}\right)\right)
    \end{aligned}
\end{equation}

Then, we can obtain the recursion for the deviation from the network centroid:
\begin{equation}
    \label{eqn:disagRecur}
    \begin{aligned}
        & \bcw_i - \bcw_{c,i} \\
        = & \left[I-\left(\mathds{1} \bm{\phi}_i^{\top} \otimes I\right)\right]\bcw_i \\
        \stackrel{(\ref{eqn:disagEvo})}{=} & \left(\bm{\mathcal{A}}_{i-1}^{\top}-\mathds{1}\bm{\phi}_{i-1}^{\top}\otimes I\right)  \left(\bcw_{i-1} - \mu  \widehat{\nabla \mathcal{J}}\left(\bcw_{i-1}\right)\right) \\
        \stackrel{(a)}{=} & \left(\bm{\mathcal{A}}_{i-1}^{\top}-\mathds{1}\bm{\phi}_{i-1}^{\top}\otimes I\right) \left[I-\left(\mathds{1} \bm{\phi}_{i-1}^{\top} \otimes I\right)\right] \\
        & \left(\bcw_{i-1} - \mu  \widehat{\nabla \mathcal{J}}\left(\bcw_{i-1}\right)\right)  \\
        = & \left(\bm{\mathcal{A}}_{i-1}^{\top}-\mathds{1}\bm{\phi}_{i-1}^{\top}\otimes I\right) \left[\left(\bcw_{i-1} - \bcw_{c,i-1}\right)\right.\\
        & \left. -\mu  \widehat{\nabla \mathcal{J}}\left(\bcw_{i-1}\right)\right]
    \end{aligned}
\end{equation}
 where (a) follows from the equality:
 \begin{equation}
 \begin{aligned}
      & \left(\bm{\mathcal{A}}_{i-1}^{\top}-\mathds{1}\bm{\phi}_{i-1}^{\top}\otimes I\right) \left[I-\left(\mathds{1} \bm{\phi}_{i-1}^{\top} \otimes I\right)\right] \\
      = & \bm{\mathcal{A}}_{i-1}^{\top} - \bm{\mathcal{A}}_{i-1}^{\top}\mathds{1} \bm{\phi}_{i-1}^{\top} \otimes I - \mathds{1}\bm{\phi}_{i-1}^{\top}\otimes I \\
      & + \mathds{1}\left(\bm{\phi}_{i-1}^{\top}\mathds{1}\right)\bm{\phi}_{i-1}^{\top}\otimes I\\
      = & \bm{\mathcal{A}}_{i-1}^{\top} - \mathds{1} \bm{\phi}_{i-1}^{\top} \otimes I - \mathds{1}\bm{\phi}_{i-1}^{\top}\otimes I + \mathds{1}\bm{\phi}_{i-1}^{\top}\otimes I\\
      = & \bm{\mathcal{A}}_{i-1}^{\top}-\mathds{1}\bm{\phi}_{i-1}^{\top}\otimes I
 \end{aligned}
 \end{equation}

 We study the disagreement recursion further:
 \begin{equation}
    \label{eqn:disag_to_zero}
    \begin{aligned}
        & \bcw_i - \bcw_{c,i} \\
        = & \left(\bm{\mathcal{A}}_{i-1}^{\top}-\mathds{1}\bm{\phi}_{i-1}^{\top}\otimes I\right) \left[\left(\bcw_{i-1} - \bcw_{c,i-1}\right) \right. \\
        & \left.- \mu  \widehat{\nabla \mathcal{J}}\left(\bcw_{i-1}\right)\right] \\
        \stackrel{\eqref{eqn:disagRecur}}{=} & \left(\bm{\mathcal{A}}_{i-1}^{\top}-\mathds{1}\bm{\phi}_{i-1}^{\top}\otimes I\right) \left(\bm{\mathcal{A}}_{i-2}^{\top}-\mathds{1}\bm{\phi}_{i-2}^{\top}\otimes I\right) \\
        &\left(\bcw_{i-2} - \bcw_{c,i-2}\right) \\
        & - \mu \left(\bm{\mathcal{A}}_{i-1}^{\top}-\mathds{1}\bm{\phi}_{i-1}^{\top}\otimes I\right) \left(\bm{\mathcal{A}}_{i-2}^{\top}-\mathds{1}\bm{\phi}_{i-2}^{\top}\otimes I\right) \\
        & \widehat{\nabla \mathcal{J}}\left(\bcw_{i-2}\right) \\
        & - \mu \left(\bm{\mathcal{A}}_{i-1}^{\top}-\mathds{1}\bm{\phi}_{i-1}^{\top}\otimes I\right) \widehat{\nabla \mathcal{J}}\left(\bcw_{i-1}\right) \\
        \stackrel{(\ref{eqn:phi_A})}{=} & \left(\bm{\mathcal{A}}_{i-1}^{\top}\bm{\mathcal{A}}_{i-2}^{\top}-\mathds{1}\bm{\phi}_{i-2}^{\top}\otimes I\right) \left(\bcw_{i-2} - \bcw_{c,i-2}\right) \\
        & - \mu \left(\bm{\mathcal{A}}_{i-1}^{\top}\bm{\mathcal{A}}_{i-2}^{\top}-\mathds{1}\bm{\phi}_{i-2}^{\top}\otimes I\right) \widehat{\nabla \mathcal{J}}\left(\bcw_{i-2}\right) \\
        & - \mu \left(\bm{\mathcal{A}}_{i-1}^{\top}-\mathds{1}\bm{\phi}_{i-1}^{\top}\otimes I\right) \widehat{\nabla \mathcal{J}}\left(\bcw_{i-1}\right) \\
        \stackrel{\eqref{eqn:disagRecur}}{=} & \left(\bm{\mathcal{A}}_{i-1}^{\top}\bm{\mathcal{A}}_{i-2}^{\top}-\mathds{1}\bm{\phi}_{i-2}^{\top}\otimes I\right) \left(\bm{\mathcal{A}}_{i-3}^{\top}-\mathds{1}\bm{\phi}_{i-3}^{\top}\otimes I\right) \\
        & \left(\bcw_{i-3} - \bcw_{c,i-3}\right) \\
        & - \mu \left(\bm{\mathcal{A}}_{i-1}^{\top}\bm{\mathcal{A}}_{i-2}^{\top}-\mathds{1}\bm{\phi}_{i-2}^{\top}\otimes I\right) \left(\bm{\mathcal{A}}_{i-3}^{\top}-\mathds{1}\bm{\phi}_{i-3}^{\top}\otimes I\right)\\
        & \widehat{\nabla \mathcal{J}}\left(\bcw_{i-3}\right) \\
        & - \mu \left(\bm{\mathcal{A}}_{i-1}^{\top}\bm{\mathcal{A}}_{i-2}^{\top}-\mathds{1}\bm{\phi}_{i-2}^{\top}\otimes I\right) \widehat{\nabla \mathcal{J}}\left(\bcw_{i-2}\right) \\
        & - \mu \left(\bm{\mathcal{A}}_{i-1}^{\top}-\mathds{1}\bm{\phi}_{i-1}^{\top}\otimes I\right) \widehat{\nabla \mathcal{J}}\left(\bcw_{i-1}\right) \\
        = & \left(\bm{\mathcal{A}}_{i-1}^{\top}\bm{\mathcal{A}}_{i-2}^{\top}\bm{\mathcal{A}}_{i-3}^{\top}-\mathds{1}\bm{\phi}_{i-3}^{\top}\otimes I\right) \left(\bcw_{i-3} - \bcw_{c,i-3}\right) \\
        & - \mu \left(\bm{\mathcal{A}}_{i-1}^{\top}\bm{\mathcal{A}}_{i-2}^{\top}\bm{\mathcal{A}}_{i-3}^{\top}-\mathds{1}\bm{\phi}_{i-3}^{\top}\otimes I\right) \widehat{\nabla \mathcal{J}}\left(\bcw_{i-3}\right) \\
        & - \mu \left(\bm{\mathcal{A}}_{i-1}^{\top}\bm{\mathcal{A}}_{i-2}^{\top}-\mathds{1}\bm{\phi}_{i-2}^{\top}\otimes I\right) \widehat{\nabla \mathcal{J}}\left(\bcw_{i-2}\right) \\
        & - \mu \left(\bm{\mathcal{A}}_{i-1}^{\top}-\mathds{1}\bm{\phi}_{i-1}^{\top}\otimes I\right) \widehat{\nabla \mathcal{J}}\left(\bcw_{i-1}\right) \\
        = & \left(\bm{\mathcal{A}}_{i-1}^{\top}\bm{\mathcal{A}}_{i-2}^{\top}\cdots\bm{\mathcal{A}}_{0}^{\top}-\mathds{1}\bm{\phi}_{0}^{\top}\otimes I\right) \left(\bcw_{0} - \bcw_{c,0}\right) \\
        & -\mu \sum_{n=1}^{i-1}\left(\bm{\mathcal{A}}_{i-1}^{\top}\bm{\mathcal{A}}_{i-2}^{\top}\cdots\bm{\mathcal{A}}_{n}^{\top}-\mathds{1}\bm{\phi}_{n}^{\top}\otimes I\right)\widehat{\nabla \mathcal{J}}\left(\bcw_{n}\right) 
    \end{aligned}
\end{equation}

We compute the norm and then take the expectation:
\begin{equation}
    \label{eqn:expt_disag1}
    \begin{aligned}
        & \mathbb{E} \left\|\bcw_i - \bcw_{c,i}\right\| \\
        = & \mathbb{E} \left\| \left(\bm{\mathcal{A}}_{i-1}^{\top}\bm{\mathcal{A}}_{i-2}^{\top}\cdots\bm{\mathcal{A}}_{0}^{\top}-\mathds{1}\bm{\phi}_{0}^{\top}\otimes I\right) \left(\bcw_{0} - \bcw_{c,0}\right) \right.\\
        & \left. -\mu \sum_{n=1}^{i-1}\left(\bm{\mathcal{A}}_{i-1}^{\top}\bm{\mathcal{A}}_{i-2}^{\top}\cdots\bm{\mathcal{A}}_{n}^{\top}-\mathds{1}\bm{\phi}_{n}^{\top}\otimes I\right)\widehat{\nabla \mathcal{J}}\left(\bcw_{n}\right) \right\| \\
        \leq & \mathbb{E} \left\| \left(\bm{\mathcal{A}}_{i-1}^{\top}\bm{\mathcal{A}}_{i-2}^{\top}\cdots\bm{\mathcal{A}}_{0}^{\top}-\mathds{1}\bm{\phi}_{0}^{\top}\otimes I\right) \left(\bcw_{0} - \bcw_{c,0}\right) \right\|\\
        & +\mu\mathbb{E}\left\| \sum_{n=1}^{i-1}\left(\bm{\mathcal{A}}_{i-1}^{\top}\bm{\mathcal{A}}_{i-2}^{\top}\cdots\bm{\mathcal{A}}_{n}^{\top}-\mathds{1}\bm{\phi}_{n}^{\top}\otimes I\right)\widehat{\nabla \mathcal{J}}\left(\bcw_{n}\right) \right\| \\
        \leq & \left\| \bm{\mathcal{A}}_{i-1}^{\top}\bm{\mathcal{A}}_{i-2}^{\top}\cdots\bm{\mathcal{A}}_{0}^{\top}-\mathds{1}\bm{\phi}_{0}^{\top}\otimes I\right\|\mathbb{E} \left\| \bcw_{0} - \bcw_{c,0} \right\|\\
        & +\mu\mathbb{E}\left\| \sum_{n=1}^{i-1}\left(\bm{\mathcal{A}}_{i-1}^{\top}\bm{\mathcal{A}}_{i-2}^{\top}\cdots\bm{\mathcal{A}}_{n}^{\top}-\mathds{1}\bm{\phi}_{n}^{\top}\otimes I\right)\widehat{\nabla \mathcal{J}}\left(\bcw_{n}\right) \right\| \\
        \stackrel{\eqref{eqn:phi_definition}}{\le} & \sqrt{C}\xi^{\frac{i-1}{2}}\mathbb{E} \left\| \bcw_{0} - \bcw_{c,0} \right\|\\
        & +\mu\mathbb{E}\left\| \sum_{n=1}^{i-1}\left(\bm{\mathcal{A}}_{i-1}^{\top}\bm{\mathcal{A}}_{i-2}^{\top}\cdots\bm{\mathcal{A}}_{n}^{\top}-\mathds{1}\bm{\phi}_{n}^{\top}\otimes I\right) \widehat{\nabla \mathcal{J}}\left(\bcw_{n}\right) \right\| \\
        \stackrel{(a)}{\le} & \sqrt{C}\xi^{\frac{i-1}{2}}\mathbb{E} \left\| \bcw_{0} - \bcw_{c,0} \right\|\\
        & +\mu\mathbb{E} \sum_{n=1}^{i-1}\left\|\left(\bm{\mathcal{A}}_{i-1}^{\top}\bm{\mathcal{A}}_{i-2}^{\top}\cdots\bm{\mathcal{A}}_{n}^{\top}-\mathds{1}\bm{\phi}_{n}^{\top}\otimes I\right)\widehat{\nabla \mathcal{J}}\left(\bcw_{n}\right) \right\| \\
        \le & \sqrt{C}\xi^{\frac{i-1}{2}}\mathbb{E} \left\| \bcw_{0} - \bcw_{c,0} \right\|\\
        & +\mu \sum_{n=1}^{i-1}\mathbb{E}\left\|\left(\bm{\mathcal{A}}_{i-1}^{\top}\bm{\mathcal{A}}_{i-2}^{\top}\cdots\bm{\mathcal{A}}_{n}^{\top}-\mathds{1}\bm{\phi}_{n}^{\top}\otimes I\right)\widehat{\nabla \mathcal{J}}\left(\bcw_{n}\right) \right\| \\
        \le & \sqrt{C}\xi^{\frac{i-1}{2}}\mathbb{E} \left\| \bcw_{0} - \bcw_{c,0} \right\|\\
        & +\mu \sum_{n=1}^{i-1}\left\|\bm{\mathcal{A}}_{i-1}^{\top}\bm{\mathcal{A}}_{i-2}^{\top}\cdots\bm{\mathcal{A}}_{n}^{\top}-\mathds{1}\bm{\phi}_{n}^{\top}\otimes I\right\| \mathbb{E}\left\|\widehat{\nabla \mathcal{J}}\left(\bcw_{n}\right) \right\| \\
        \stackrel{\eqref{eqn:phi_definition}}{\le} & \sqrt{C}\xi^{\frac{i-1}{2}}\mathbb{E} \left\| \bcw_{0} - \bcw_{c,0} \right\|\\
        & +\mu \sum_{n=1}^{i-1} \sqrt{C}\xi^{\frac{i-1-n}{2}} \mathbb{E}\left\|\widehat{\nabla \mathcal{J}}\left(\bcw_{n}\right) \right\| \\
    \end{aligned}
\end{equation}
where (a) follows from the general triangle inequality that $\left\|\sum_{k=1}^N x_k\right\| \le \sum_{k=1}^N \left\|x_k\right\|$. We examine the stochastic gradient term in greater detail. We have:
\begin{equation}
    \label{eqn:stochasticG1}
    \begin{aligned}
        & \left\|\widehat{\nabla \mathcal{J}}\left(\bcw_{i}\right)\right\| \\
        = & \left\| \nabla \mathcal{J}\left(\bcw_{i}\right)+ \operatorname{col}\left\{\bm{s}_{k, i}\left(\bm{w}_{k, i}\right)\right\}\right\| \\
        \leq & \left\|\nabla \mathcal{J}\left(\bcw_{i-1}\right)\right\|+\left\| \operatorname{col}\left\{\bm{s}_{k, i}\left(\bm{w}_{k, i-1}\right)\right\}\right\|
    \end{aligned}
\end{equation}
For the first term, we have:
\begin{equation}
    \label{eqn:network_level_gradient_bound}
    \begin{aligned}
        & \left\|\nabla \mathcal{J}\left(\bcw_{i-1}\right)\right\| \\
         \stackrel{(b)}{\leq} & \sqrt{\sum_{k=1}^K  \left\|\nabla J_k(\bm{w}_{k,i-1})\right\|^2}\\
         \stackrel{(\ref{eqn:bg})}{\leq} & \sqrt{\sum_{k=1}^K G^2} \leq \sqrt{K}G
    \end{aligned}
\end{equation}
where (b) expands $\sqrt{\|\cdot\|^2}$. For the gradient noise term we find under expectation:
\begin{equation}
    \label{eqn:network_level_gradient_noise_bound}
    \begin{aligned}
        & \mathbb{E}\left\|\operatorname{col}\left\{\bm{s}_{k, i}\left(\boldsymbol{w}_{k, i-1}\right)\right\}\right\| \\
        = & \mathbb{E}\sqrt{\sum_{k=1}^K\left\|\bm{s}_{k, i}\left(\boldsymbol{w}_{k, i-1}\right)\right\|^2} \\
        \stackrel{(a)}{\leq} & \sqrt{\mathbb{E} \sum_{k=1}^K\left\|\bm{s}_{k, i}\left(\boldsymbol{w}_{k, i-1}\right)\right\|^2} \\
        \le & \sqrt{ \sum_{k=1}^K \mathbb{E} \left\|\bm{s}_{k, i}\left(\boldsymbol{w}_{k, i-1}\right)\right\|^2} \\
        \stackrel{(\ref{eqn:gd})}{\leq} & \sqrt{K \max_k \sigma_k^2}
    \end{aligned}
\end{equation}
where (a) follows from the fact that $\sqrt{\cdot}^2$ is a concave function and Jensen's inequality. Substituting these relationships back into equation (\ref{eqn:expt_disag1}), we get:
\begin{equation}
    \label{eqn:expt_disg2}
    \begin{aligned}
        & \mathbb{E}\left\| \bcw_i - \bcw_{c,i}\right\| \\
        \le & \sqrt{C}\xi^{\frac{i-1}{2}}\mathbb{E} \left\| \bcw_{0} - \bcw_{c,0} \right\| +\mu \sum_{n=1}^{i-1} \sqrt{C}\xi^{\frac{i-1-n}{2}} \mathbb{E}\left\|\widehat{\nabla \mathcal{J}}\left(\bcw_{n}\right) \right\| \\
        \le & \sqrt{C}\xi^{\frac{i-1}{2}}\mathbb{E} \left\| \bcw_{0} - \bcw_{c,0} \right\| \\
        & +\mu \sum_{n=1}^{i-1} \sqrt{C}\xi^{\frac{i-1-n}{2}} \sqrt{K(G^2+\max_{k}\sigma_k^2)} \\
        \le & \sqrt{C}\xi^{\frac{i-1}{2}}\mathbb{E} \left\| \bcw_{0} - \bcw_{c,0} \right\| \\
        & +\mu \sum_{n=0}^{\infty} \sqrt{C}\xi^{\frac{n}{2}} \sqrt{K(G^2+\max_{k}\sigma_k^2)} \\
        \le & \sqrt{C}\xi^{\frac{i-1}{2}}\mathbb{E} \left\| \bcw_{0} - \bcw_{c,0} \right\| \\
        & +\mu \sqrt{C}\frac{1}{1-\sqrt{\xi}} \sqrt{K(G^2+\max_{k}\sigma_k^2)} \\
        \stackrel{(a)}{\le} & 2\mu \sqrt{C}\frac{1}{1-\sqrt{\xi}} \sqrt{K(G^2+\max_{k}\sigma_k^2)} \\
    \end{aligned}
\end{equation}

where (a) holds whenever:
\begin{equation}
    \begin{aligned}
    & \sqrt{C}\xi^{\frac{i-1}{2}}\mathbb{E} \left\| \bcw_{0} - \bcw_{c,0} \right\| \leq \mu \sqrt{C}\frac{1}{1-\sqrt{\xi}} \sqrt{K(G^2+\max_{k}\sigma_k^2)} \\
    & \Longleftrightarrow \xi^{\frac{i-1}{2}} \leq \mu \frac{1}{\left(1-\sqrt{\xi}\right)\mathbb{E} \left\| \bcw_{0} - \bcw_{c,0} \right\|} \sqrt{K(G^2+\max_{k}\sigma_k^2)} \\
    & \Longleftrightarrow \frac{i-1}{2} \log \left(\xi\right) \leq \log (\mu)+O(1) \\
    & \Longleftrightarrow i \geq \frac{2 \log (\mu)}{\log \left(\xi\right)}+O(1)
    \end{aligned}
\end{equation}

We conclude that all agents in the network will contract around the network centroid $\left(\bm{\phi}_i^{\top} \otimes I\right) \bcw_i$ after sufficient iterations $i_o$, where:
\begin{equation}
    i_o = \frac{2 \log (\mu)}{\log \left(\xi\right)}+O(1) \stackrel{(a)}{=}o(\mu^{-1})
\end{equation}
where (a) follows because $\lim_{\mu \to 0}\mu \log(\mu)=0$


\subsection{Proof of Theorem \ref{thm:descent}}
    From the assumption of Lipschitz gradients, we have
    \begin{equation}
        \begin{aligned}
            J(\bm{w}_{c,i}) \leq & J(\bm{w}_{c,i-1})+\nabla J(\bm{w}_{c,i-1})^{\top}\left(\bm{w}_{c,i}-\bm{w}_{c,i-1}\right)\\ 
            & +\frac{\delta}{2}\|\bm{w}_{c,i}-\bm{w}_{c,i-1}\|^2
        \end{aligned}
    \end{equation}
    We have the centroid recursion:
    \begin{equation}
        \label{eqn:centroidPert}
        \bm{w}_{c, i} = \bm{w}_{c,i-1}-\mu\sum_{k=1}^{K}[\bm{\phi}_{i-1}]_k \nabla J_k\left(\bm{w}_{c, i-1}\right) - \mu \bm{d}_{i-1}-\mu \bm{s}_i
    \end{equation}
    where we define the perturbation terms:
    \begin{equation}
        \label{eqn:pertD}
        \bm{d}_{i-1} \triangleq \sum_{k=1}^K [\bm{\phi}_{i-1}]_k\left(\nabla J_k\left(\bm{w}_{k, i-1}\right)-\nabla J_k\left(\bm{w}_{c, i-1}\right)\right) 
    \end{equation}
    \begin{equation}
        \label{eqn:pertS}
        \bm{s}_i \triangleq \sum_{k=1}^K [\bm{\phi}_{i-1}]_k\left(\widehat{\nabla J}_k\left(\bm{w}_{k, i-1}\right)-\nabla J_k\left(\bm{w}_{k, i-1}\right)\right)
    \end{equation}
    From (\ref{eqn:centroidPert}), we then obtain:
    \begin{equation}
        \begin{aligned}
            J\left(\boldsymbol{w}_{c, i}\right) \leq  & J\left(\boldsymbol{w}_{c, i-1}\right) -\mu\left\|\nabla J\left(\boldsymbol{w}_{c, i-1}\right)\right\|^2 \\
            & -\mu \nabla J\left(\boldsymbol{w}_{c, i-1}\right)^{\top}\left(\boldsymbol{d}_{i-1}+\boldsymbol{s}_i\right) \\
            & +\mu^2 \frac{\delta}{2}\left\|\nabla J\left(\boldsymbol{w}_{c, i-1}\right)+\boldsymbol{d}_{i-1}+\boldsymbol{s}_i\right\|^2
        \end{aligned}
    \end{equation}
    \begin{lemma}[Perturbation bounds] \label{lem:PertBounds}
        Under Assumptions \ref{assup:lip}–\ref{asup:gn[]} and for sufficiently small step-sizes $\mu$, the perturbation terms are bounded as:
        \begin{align}
            \label{eqn:thmdBound}
            \mathbb{E}\left\|\bm{d}_{i-1}\right\|^2 \leq &O\left(\mu^2\right) \\    
            \label{eqn:s2ndBound}
            \mathbb{E}\left\{\left\|\bm{s}_i\right\|^2 \mid \bcw_{i-1}\right\} \leq & K\max _k \sigma_k^2  \triangleq \sigma^2\\
        \end{align}
        after sufficient iterations $i \geq i_o$.
        \qed
    \end{lemma}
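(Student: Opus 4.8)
The plan is to bound the two perturbation terms separately, in each case exploiting that the entries $[\bm{\phi}_{i-1}]_k$ are nonnegative and sum to one, so that the weighted averages in \eqref{eqn:pertD}--\eqref{eqn:pertS} are genuine convex combinations amenable to Jensen's inequality. The recurring subtlety is that $\bm{\phi}_{i-1}$ is defined through the backward product of the future mixing matrices via \eqref{eqn:phi_A}, hence it depends on randomness beyond time $i-1$ and is not measurable given $\bcw_{i-1}$. I will therefore never pull the weights out of an expectation; instead I bound each entry deterministically by $[\bm{\phi}_{i-1}]_k \le 1$, which is loose but sidesteps any correlation between the weights and the gradient noise.

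For the gradient-disagreement term, Jensen's inequality applied pointwise to the convex map $\|\cdot\|^2$ gives
\begin{equation}
\begin{aligned}
\|\bm{d}_{i-1}\|^2 & \le \sum_{k=1}^K [\bm{\phi}_{i-1}]_k \left\|\nabla J_k(\bm{w}_{k,i-1}) - \nabla J_k(\bm{w}_{c,i-1})\right\|^2 \\
& \le \delta^2 \sum_{k=1}^K \left\|\bm{w}_{k,i-1} - \bm{w}_{c,i-1}\right\|^2,
\end{aligned}
\end{equation}
where the first step is Jensen's and the second applies the Lipschitz bound of Assumption~\ref{assup:lip} together with $[\bm{\phi}_{i-1}]_k \le 1$. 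Taking expectations, the right-hand side is $\delta^2$ times the network-disagreement quantity controlled by Lemma~\ref{lem:disg}, which is $O(\mu^2)$ once $i \ge i_o$; this delivers \eqref{eqn:thmdBound}.

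For the gradient-noise term the same Jensen step gives $\|\bm{s}_i\|^2 \le \sum_{k=1}^K [\bm{\phi}_{i-1}]_k \|\bm{s}_{k,i}(\bm{w}_{k,i-1})\|^2$ realization by realization. Rather than attempting to cancel cross terms through the uncorrelatedness property \eqref{eqn:uncorrelated}, which the future-dependence of $\bm{\phi}_{i-1}$ makes awkward, I again use $[\bm{\phi}_{i-1}]_k \le 1$ and then condition on $\bcw_{i-1}$, so that the second-moment bound \eqref{eqn:gd} of Assumption~\ref{asup:gn[]} applies termwise and yields
\begin{equation}
\begin{aligned}
\mathbb{E}\left\{\|\bm{s}_i\|^2 \mid \bcw_{i-1}\right\} & \le \sum_{k=1}^K \mathbb{E}\left\{\left\|\bm{s}_{k,i}(\bm{w}_{k,i-1})\right\|^2 \mid \bcw_{i-1}\right\} \\
& \le \sum_{k=1}^K \sigma_k^2 \le K \max_k \sigma_k^2,
\end{aligned}
\end{equation}
which is exactly \eqref{eqn:s2ndBound}. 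The factor $K$ is precisely the price of the crude bound $[\bm{\phi}_{i-1}]_k \le 1$: were the weights independent of the noise, the constraint $\sum_k [\bm{\phi}_{i-1}]_k = 1$ would instead give the sharper $\max_k \sigma_k^2$. I expect this coupling between $\bm{\phi}_{i-1}$ and the current noise to be the only genuine obstacle, and the deterministic-weight bound to be the cleanest way around it; the resulting factor-$K$ looseness is harmless, since Theorem~\ref{thm:descent} only requires $\mathbb{E}\|\bm{d}_{i-1}\|^2 = O(\mu^2)$ together with a finite conditional noise variance.
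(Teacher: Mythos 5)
Your proof is correct, and its first half coincides with the paper's: for $\bm{d}_{i-1}$ the paper likewise applies Jensen's inequality over the convex weights $[\bm{\phi}_{i-1}]_k$, invokes the Lipschitz condition of Assumption~\ref{assup:lip}, discards the weights via $[\bm{\phi}_{i-1}]_k \leq 1$, and concludes $O(\mu^2)$ from the network-disagreement bound of Lemma~\ref{lem:disg} for $i \geq i_o$ — exactly your argument. Where you genuinely diverge is the noise term: the paper uses the pairwise-uncorrelatedness condition \eqref{eqn:uncorrelated} to cancel the cross terms, writing $\mathbb{E}\{\|\bm{s}_i\|^2 \mid \bcw_{i-1}\} = \mathbb{E}\{\sum_{k=1}^K [\bm{\phi}_{i-1}]_k^2 \|\bm{s}_{k,i}(\bm{w}_{k,i-1})\|^2 \mid \bcw_{i-1}\}$ before bounding $[\bm{\phi}_{i-1}]_k^2 \leq 1$, whereas you apply Jensen's inequality realization by realization and never touch \eqref{eqn:uncorrelated}. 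Both routes land on the same constant $K\max_k \sigma_k^2$, since the paper also bounds the weights termwise rather than exploiting $\sum_k [\bm{\phi}_{i-1}]_k = 1$. Your route buys a measure of rigor the paper's step (a) lacks: as you correctly flag, $\bm{\phi}_{i-1}$ is generated by the backward products of the \emph{future} matrices $\bm{A}_t$, $t \geq i-1$, through \eqref{eqn:phi_A}, and those matrices depend on future iterates and hence on the time-$i$ noise, so the cross terms $\mathbb{E}\{[\bm{\phi}_{i-1}]_k [\bm{\phi}_{i-1}]_\ell\, \bm{s}_{k,i}^{\top}\bm{s}_{\ell,i}\}$ do not vanish as an immediate consequence of \eqref{eqn:uncorrelated} alone; your pointwise bound is immune to this coupling. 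The price is the factor $K$ relative to the $\max_k \sigma_k^2$ that exact cross-term cancellation would yield, but since the lemma statement itself defines $\sigma^2 \triangleq K\max_k \sigma_k^2$, nothing in the downstream argument of Theorem~\ref{thm:descent} is lost.
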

    
    \begin{proof}
        We begin by studying the perturbation term $\bm{s}_i$. We have:
        \begin{equation}
            \begin{aligned}
                & \mathbb{E}\left\{\left\|\boldsymbol{s}_i\right\|^2 \mid \bcw_{i-1}\right\} \\
                = & \mathbb{E}\left\{\left\|\sum_{k=1}^K [\bm{\phi}_{i-1}]_k\left(\widehat{\nabla J}_k\left(\boldsymbol{w}_{k, i-1}\right)-\nabla J_k\left(\boldsymbol{w}_{k, i-1}\right)\right)\right\|^2 \mid \bcw_{i-1}\right\} \\
                \stackrel{(a)}{=} & \mathbb{E}\left\{\sum_{k=1}^K [\bm{\phi}_{i-1}]_k^2 \left\|\widehat{\nabla J}_k\left(\boldsymbol{w}_{k, i-1}\right)-\nabla J_k\left(\boldsymbol{w}_{k, i-1}\right)\right\|^2 \mid \bcw_{i-1}\right\} \\
                = & \sum_{k=1}^K \mathbb{E}\left\{ [\bm{\phi}_{i-1}]_k^2 \left\|\widehat{\nabla J}_k\left(\boldsymbol{w}_{k, i-1}\right)-\nabla J_k\left(\boldsymbol{w}_{k, i-1}\right)\right\|^2 \mid \bcw_{i-1}\right\}\\
                \stackrel{(b)}{\leq} & \sum_{k=1}^K \mathbb{E}\left\{ \left\|\widehat{\nabla J}_k\left(\boldsymbol{w}_{k, i-1}\right)-\nabla J_k\left(\boldsymbol{w}_{k, i-1}\right)\right\|^2 \mid \bcw_{i-1}\right\}\\
                \stackrel{(c)}{\leq} & \sum_{k=1}^K \sigma_k^2 \leq K\max _k \sigma_k^2
            \end{aligned}
        \end{equation}
        where (a) follows from the uncorrelatedness assumption (\ref{eqn:uncorrelated}) and (b) follows because $[\bm{\phi}_{i-1}]_k^2 \leq 1$ with probability 1, and (c) follows from the bound on the second-order moment (\ref{eqn:gd}). For the second perturbation term, we have:
        \begin{equation}
            \label{eqn:dbound}
            \begin{aligned}
                & \left\|\boldsymbol{d}_{i-1}\right\|^2 \\
                = & \left\|\sum_{k=1}^K [\bm{\phi}_{i-1}]_k\left(\nabla J_k\left(\boldsymbol{w}_{k, i-1}\right)-\nabla J_k\left(\boldsymbol{w}_{c, i-1}\right)\right)\right\|^2 \\
                \stackrel{(a)}{\leq} & \sum_{k=1}^K [\bm{\phi}_{i-1}]_k\left\|\nabla J_k\left(\boldsymbol{w}_{k, i-1}\right)-\nabla J_k\left(\boldsymbol{w}_{c, i-1}\right)\right\|^2 \\
                \stackrel{(b)}{\leq} & \delta^2 \sum_{k=1}^K [\bm{\phi}_{i-1}]_k\left\|\boldsymbol{w}_{k, i-1}-\boldsymbol{w}_{c, i-1}\right\|^2 
            \end{aligned}
        \end{equation}
        where (a) again follows from Jensen's inequality, (b) follows from the Lipschitz gradient condition.
        
        We take the expectations to get when $i \geq i_o$:
        \begin{equation}
           \begin{aligned}
            & \mathbb{E} \left\|\boldsymbol{d}_{i-1}\right\|^2 \\
            \leq & \delta^2 \sum_{k=1}^K \mathbb{E} \left\{[\bm{\phi}_{i-1}]_k\left\|\boldsymbol{w}_{k, i-1}-\boldsymbol{w}_{c, i-1}\right\|^2\right\} \\
            \stackrel{(a)}{\leq} & \delta^2 \sum_{k=1}^K \mathbb{E} \left\{\left\|\boldsymbol{w}_{k, i-1}-\boldsymbol{w}_{c, i-1}\right\|^2\right\} \\
            \stackrel{(\ref{eqn:disg2nd})}{\leq} & O(\mu^2)
            \end{aligned} 
        \end{equation}
        where (a) follows because $[\bm{\phi}_{i-1}]_k^2 \leq 1$ with probability 1.
    \end{proof}
    
    We define $\mathcal{G} \triangleq\left\{w:\|\nabla J(w)\|^2 \geq \mu \frac{c_2}{c_1}\left(1+\frac{1}{\pi}\right)\right\}$ where $0 < \pi < 1$ is a parameter to be chosen and 
    \begin{align}
        c_1 & \triangleq \frac{1}{2}(1-2\mu\delta)=O(1) \\
        c_2 & \triangleq \frac{\delta\sigma^2}{2}=O(1)
    \end{align}

    Following (87) in \cite{Vlaski21P1}, we take expectations conditioned on $\bm{w}_{c,i-1} \in \mathcal{G}$:
    \begin{equation}
        \begin{aligned}
            & \mathbb{E}\{J(\bm{w}_{c,i})|\bm{w}_{c,i-1} \in \mathcal{G}\} \\
            \leq & \mathbb{E}\{J(\bm{w}_{c,i-1})|\bm{w}_{c,i-1} \in \mathcal{G}\} - \mu^2 c_1\frac{c_2}{c_1}\left(1+\frac{1}{\pi}\right) \\
            & +\frac{\mu}{2}\left(1+2\mu\delta\right)\mathbb{E}\{\|\bm{d}_{i-1}\|^2|\bm{w}_{c,i-1}\in\mathcal{G}\}+\mu^2 c_2 \\
            \leq & \mathbb{E}\{J(\bm{w}_{c,i-1})|\bm{w}_{c,i-1} \in \mathcal{G}\} \\
            &+\frac{\mu}{2}\left(1+2\mu\delta\right)\mathbb{E}\{\|\bm{d}_{i-1}\|^2|\bm{w}_{c,i-1}\in\mathcal{G}\} -\mu^2 \frac{c_2}{\pi} \\
            \stackrel{(a)}{\leq} & \mathbb{E}\{J(\bm{w}_{c,i-1})|\bm{w}_{c,i-1} \in \mathcal{G}\} + \frac{\mu}{2}\left(1+2\mu\delta\right) \frac{O(\mu^2)}{\pi_{i-1}^{\mathcal{G}}} -\mu^2 \frac{c_2}{\pi} \\
            \leq & \mathbb{E}\{J(\bm{w}_{c,i-1})|\bm{w}_{c,i-1} \in \mathcal{G}\} + \frac{O(\mu^3)}{\pi_{i-1}^{\mathcal{G}}}-\mu^2 \frac{c_2}{\pi} \\
            \leq & \mathbb{E}\{J(\bm{w}_{c,i-1})|\bm{w}_{c,i-1} \in \mathcal{G}\}
        \end{aligned}
    \end{equation}
    where (a) follows (91) in \cite{Vlaski21P1}

\end{document}